\newtheorem{theorem}{Theorem}
\newtheorem{lemma}[theorem]{Lemma}
\title{Tightening the Evaluation of PAC Bounds Using Formal Verification Results}
\author{
  Thomas Walker\\
  Imperial College London \\
  \texttt{thomas.walker21@imperial.ac.uk}
   \And
  Alessio Lomuscio \\
  Imperial College London \\
  \texttt{a.lomuscio@imperial.ac.uk}
}
\begin{document}
\maketitle

\begin{abstract}
Probably Approximately Correct (PAC) bounds are widely used to derive probabilistic guarantees for the generalisation of machine learning models. They highlight the components of the model which contribute to its generalisation capacity. However, current state-of-the-art results are loose in approximating the generalisation capacity of deployed machine learning models. Consequently, while PAC bounds are theoretically useful, their applicability for evaluating a model's generalisation property in a given operational design domain is limited. The underlying classical theory is supported by the idea that bounds can be tightened when the number of test points available to the user to evaluate the model increases. Yet, in the case of neural networks, the number of test points required to obtain bounds of interest is often impractical even for small problems.

In this paper, we take the novel approach of using the formal verification of neural systems to inform the evaluation of PAC bounds. Rather than using pointwise information obtained from repeated tests, we use verification results on regions around test points. We show that conditioning existing bounds on verification results leads to a tightening proportional to the underlying probability mass of the verified region.
\end{abstract}

\section{Introduction}

A key challenge in the deployment of neural networks in applications where safety is paramount is to provide assurance on their generalisation. Put succinctly, performance on the test set is not sufficient to ensure that the model will perform adequately when deployed even when the input data at run-time is in distribution with the training set. Answering these questions with sufficient confidence would facilitate deployment and certification. 

The theory of Probably Approximately Correct (PAC) bounds is a classical statistical framework used to derive probabilistic generalisation bounds for learning algorithms. The generalisation capacity of a learning algorithm, such as a neural network, is the ability to extrapolate performance from a random training sample to the underlying distribution of the data. The generalisation gap quantifies generalisation capacity as the difference between the true error of the learning algorithm and the error of the learning algorithm on an independent evaluating sample. PAC bounds utilise the structure of the learning algorithm along with the assumptions on the evaluating sample to derive probabilistic bounds on the generalisation gap. Generally speaking, a PAC bound states that for a particular configuration of the learning algorithm, we have $$\text{true error}\leq\text{eval error}+\frac{\text{complexity}}{m}$$with high probability, where $m$ is the number of samples in the evaluating sample. In other words, $$\text{generalisation gap}=\text{true error}-\text{eval error}\leq\frac{\text{complexity}}{m}.$$The complexity term identifies how certain properties of the learning algorithm influence its generalisation capacity. The fidelity of the complexity terms is essential for tight generalisation bounds at scale, as increasing the sample size of the evaluating sample yields diminishing returns. In the classical setting, the complexity term may involve quantities such as the Vapnik-Chervonenkis dimension \cite{bartlett_nearly-tight_2017}, however, for neural networks more appropriate complexity terms have been derived \cite{arora_stronger_2018}. 

It is standard to consider the neural network as a random sample from a distribution of networks and derive generalisation bounds in a Bayesian setting. The PAC-Bayes framework \cite{mcallester_pac-bayesian_1998} combines PAC learning theory and Bayesian reasoning. In this setting, bounds have been successfully applied in practice by optimizing its components, such as the prior distribution. In \cite{dziugaite_computing_2017} the prior distribution is optimised by understanding how stochastic gradient descent trains a neural network, resulting in the initial non-vacuous PAC-Bayes bounds for neural networks. Subsequently, \cite{zhou_non-vacuous_2019} successfully contextualised the work of \cite{arora_stronger_2018} to obtain non-vacuous bounds at the Image-Net level. An open challenge in this area concerns the derivation of tight generalisation bounds for neural networks that can scale to large models.

So far, generalisation bounds have been optimised under the PAC-Bayes framework by utilising information about neural networks at individual samples. The convergence to tight generalisation bounds under this process is extremely slow, and, in practice, still leads to bounds that are too loose to provide a significant prediction on model behaviour. Therefore, rather than using individual training samples, we suggest testing the neural network in regions of the input space to obtain information about the neural network that is likely to generate tighter generalisation bounds.

In this paper, we explore a novel strategy for evaluating generalisation bounds. We use information obtained through neural network verification to derive provably tighter generalisation bounds.

In the rest of this section, we summarise related work. In Section \ref{sec:preliminaries} we introduce the theory of PAC bounds and the basics of neural network verification. In Section \ref{sec:verification_incorporated_pac_bounds} we formulate the problem using an assumption which we then utilise to obtain provably tighter generalisation bounds. We then understand how formal verification can satisfy our assumption to yield tighter generalisation bounds. In Section \ref{sec:conclusion} we summarise our results and identify important areas for future work.

\subsection{Related Work}

It has been shown theoretically in \cite{xu_robustness_2010} that a learning algorithm generalises if and only if it is robust against perturbations. While this work theoretically supports the idea of tightening generalisation bounds using verification, it does not explore this connection in a way that is useful in practice given the current methods of formal verification. It formulates robustness as a condition over the entire input space, whereas methods of neural network verification certify robustness in neighbourhoods of points in the input space \cite{weng_proven_2019}. In our work, we provide an alternative approach to theoretically incorporate robustness into generalisation bounds such that the extent to which robustness improves generalisation bounds can be empirically investigated. 

In our work we look at how robustness can inform generalisation bounds, however, there is a closely related body of work that uses the PAC-Bayes framework to train robust neural networks. On the one hand, \cite{viallard_pac-bayes_2021} uses the PAC-Bayes framework to obtain a probabilistic bound on the occurrence of adversarial samples. Adversarial samples are slight perturbations of samples in the input space that lead to qualitatively different behaviour in the output of the neural network. Similarly, work looks at training neural networks to be robust against adversarial samples \cite{madry_towards_2019,zhang_theoretically_2019}. However, with such methods, there is an observed comprise with the generalisation capacity of the neural network \cite{rice_overfitting_2020}. Therefore, \cite{jin_enhancing_2022,wang_improving_2022} use the PAC-Bayes framework to inform the methods of adversarial robustness training to help maintain the generalisation capacity of the trained neural networks. This line of work uses the idea that a neural network that generalises should be robust, to train robust neural networks. In this paper, we use the idea that a robust neural network should generalise, to strengthen generalisation bounds.

As discussed earlier, there is work that aims at tightening generalisation bounds for neural networks by optimising different components of the bound rather than relying on scaling up the size of the evaluating set. \cite{dziugaite_computing_2017} uses artefacts from stochastic gradient descent to optimise the prior distribution of PAC-Bayes bounds, whereas \cite{zhou_non-vacuous_2019} understands how weight perturbations propagate through the neural network to contextualise the work of \cite{arora_stronger_2018}. However, none of these approaches use verification artefacts, as we do here, to obtain tighter bounds.

There is work in the literature in developing methods of formal verification of neural networks \cite{katz_reluplex_2017,botoeva_efficient_2020,bunel_lagrangian_2020,ferrari_complete_2022}, which have been applied to certify the performance of neural networks in applications \cite{kouvaros_formal_2021}. However, none of them explore how the results of their methods could be incorporated into generalisation bounds. Here we intend to provide a framework to leverage the success of this body of work to improve the tightness of evaluated generalisation bounds.

To our knowledge, this is the first approach to bring verification into the evaluation of generalisation bounds.

\section{Preliminaries}\label{sec:preliminaries}

\subsection{PAC Bounds}

Let $\mathcal{Z}=\mathcal{X}\times\mathcal{Y}$ be the input space where $\mathcal{X}$ is the feature space and $\mathcal{Y}$ is the label space. We assume an unknown distribution $\mathcal{D}$ is supported on the input space. A neural network $h:\mathcal{X}\to\mathcal{Y}$ is trained to encode the distribution $\mathcal{D}$ on $\mathcal{Z}$. A neural network is represented by $h_{\mathbf{w}}\in\mathcal{H}$ where $\mathbf{w}\in\mathcal{W}$ is a weight vector. The sets $\mathcal{W}$ and $\mathcal{H}$ will be referred to as the parameter space and the hypothesis set respectively. A loss function $l:\mathcal{Y}\times\mathcal{Y}\to[0,C]$ is used to assess the quality of a trained neural network. For $z=(x,y)\in\mathcal{Z}$ we identify the quantity $l_z(\mathbf{w}):=l(h_{\mathbf{w}}(x),y)$ as the error of the input under the neural network $h_{\mathbf{w}}$. The true error of the neural network on the input space is then $$R(\mathbf{w})=\mathbb{E}_{z\sim\mathcal{D}}(l_z(\mathbf{w})),$$ which is dependent on the unknown distribution $\mathcal{D}$ and hence unknown. Thus, in practice, we work with an evaluated error obtained from a sample $S=\{z_i\}_{i=1}^m=\{(x_i,y_i)\}_{i=1}^m$, which is assumed to consist of $m$ independent and identically distributed samples from $\mathcal{D}$. Specifically, the evaluated error is $$\hat{R}(\mathbf{w})=\frac{1}{m}\sum_{i=1}^ml_{z_i},$$which under the assumptions of the evaluating sample satisfies $$\mathbb{E}_{S\sim\mathcal{D}^m}\left(\hat{R}(\mathbf{w})\right)=R(\mathbf{w}).$$

Under these conditions, we have the following.
\begin{theorem}[\cite{alquier_user-friendly_2023}]\label{thm:pac_bound}
    Let $\vert\mathcal{W}\vert=M<\infty$, $\delta\in(0,1)$ and $\mathbf{w}\in\mathcal{W}$. Then$$\mathbb{P}_{S\sim\mathcal{D}^m}\left(R(\mathbf{w})\leq\hat{R}(\mathbf{w})+C\sqrt{\frac{\log\left(\frac{M}{\delta}\right)}{2m}}\right)\geq1-\delta.$$
\end{theorem}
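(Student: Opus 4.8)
The plan is to combine a concentration inequality for bounded i.i.d.\ random variables with a union bound over the finite parameter space. First I would fix an arbitrary $\mathbf{w}\in\mathcal{W}$ and observe that $\hat{R}(\mathbf{w})=\frac{1}{m}\sum_{i=1}^m l_{z_i}(\mathbf{w})$ is an average of $m$ independent random variables, each taking values in $[0,C]$ since the loss is $[0,C]$-valued, and each with expectation $R(\mathbf{w})$ by the identity $\mathbb{E}_{S\sim\mathcal{D}^m}(\hat{R}(\mathbf{w}))=R(\mathbf{w})$ recorded above. Hoeffding's inequality then yields, for every $t>0$,
$$\mathbb{P}_{S\sim\mathcal{D}^m}\bigl(R(\mathbf{w})-\hat{R}(\mathbf{w})\geq t\bigr)\leq\exp\!\left(-\frac{2mt^2}{C^2}\right).$$

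Next I would calibrate $t$ so that the right-hand side equals $\delta/M$: solving $\exp(-2mt^2/C^2)=\delta/M$ gives $t=C\sqrt{\log(M/\delta)/(2m)}$, which is exactly the complexity term in the statement. For this value of $t$, each individual ``bad event'' $\{R(\mathbf{w})-\hat{R}(\mathbf{w})\geq t\}$ has probability at most $\delta/M$.

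Finally I would take a union bound over all $M$ elements of $\mathcal{W}$: the probability that $R(\mathbf{w}')-\hat{R}(\mathbf{w}')\geq t$ holds for \emph{some} $\mathbf{w}'\in\mathcal{W}$ is at most $M\cdot(\delta/M)=\delta$. On the complementary event, of probability at least $1-\delta$, the inequality $R(\mathbf{w}')\leq\hat{R}(\mathbf{w}')+t$ holds simultaneously for every $\mathbf{w}'\in\mathcal{W}$, and in particular for the given $\mathbf{w}$, which is the claim.

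There is no serious obstacle here; the only points requiring care are verifying the hypotheses of Hoeffding's inequality (boundedness of the summands in $[0,C]$ and their independence, both immediate from the codomain of $l$ and the i.i.d.\ assumption on $S$) and allotting each of the $M$ hypotheses a failure budget of $\delta/M$ in the union bound. One could instead drop the union bound if the bound were only wanted for a fixed, data-independent $\mathbf{w}$ --- then Hoeffding with $t=C\sqrt{\log(1/\delta)/(2m)}$ would suffice --- but the stated form with $\log(M/\delta)$ is the uniform version, which is what makes the result applicable when $\mathbf{w}$ is selected on the basis of $S$.
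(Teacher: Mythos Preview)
Your argument is correct and is the standard proof of this classical bound. The paper itself does not give a proof of Theorem~\ref{thm:pac_bound}; it simply cites \cite{alquier_user-friendly_2023}. However, the surrounding discussion makes clear that exactly your route is intended: the text remarks that the factor $M$ ``arises due to a union-bound argument'', and the appendix records Hoeffding's lemma (Lemma~\ref{lem:exponential_bound_exponential_xpectation}), which is the moment-generating-function estimate underlying the Hoeffding inequality you invoke. So your proposal matches the approach the paper implicitly points to, and your closing observation --- that for a fixed data-independent $\mathbf{w}$ the union bound is unnecessary and one may replace $\log(M/\delta)$ by $\log(1/\delta)$ --- is precisely the simplification the paper adopts when it later applies Theorem~\ref{thm:pac_bound} in Section~\ref{sec:verification_incorporated_pac_bounds}.
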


%I would avoid talking about "configured" models. 
Theorem \ref{thm:pac_bound} states that for a trained model, parameterised by $\mathbf{w}\in\mathcal{W}$, the generalisation gap on an evaluating sample $S$ of size $m$ independently sampled from $\mathcal{D}$ is bounded by $C\sqrt{\frac{\log\left(\frac{M}{\delta}\right)}{2m}}$ with probability, or confidence, $1-\delta$. Intuitively the generalisation bound should be decreasing in $m$ and $\delta$, since for larger $m$ the evaluated error should be closer to the true error, and for larger $\delta$ the bound can hold with less confidence. We also note the factor of $M$ in the generalisation bound of Theorem \ref{thm:pac_bound}, which arises due to a union-bound argument that allows the result to hold uniformly over all possible configurations of the learning algorithm. As the space of weight vectors for a neural network is large, such uniform results in this setting are vacuous. Therefore, state-of-the-art PAC bounds are derived under the PAC-Bayes framework to eliminate this factor of $M$ by sampling $\mathbf{w}$ from a posterior distribution supported on $\mathcal{W}$. In this paper we will suppose that the configuration of the neural network is determined independently of the sample $S$; this will eliminate the factor of $M$ in the generalisation bound of Theorem \ref{thm:pac_bound}. 

The above assumption allows us to apply Theorem \ref{thm:pac_bound} non-vacuously in the setting of neural networks under the condition that the sample we use to determine the generalisation capacity of the neural network is different from the sample used to train the neural network. We adopt this approach for simplicity, with the understanding that one can readily extend the explorations of this paper to the more general application of PAC bounds to neural networks.

\subsection{Verification}

We will consider the formal verification of a neural network trained to perform $k$-class classification in $d$-dimensional feature space. A neural network $h_{\mathbf{w}}:\mathbb{R}^d\to\mathbb{R}^k$ correctly classifies $(x,y)\in\mathcal{Z}$ if $\mathrm{argmax}(h_{\mathbf{w}}(x))=y$. Assume a neural network $h_{\mathbf{w}}$ correctly classifies $(x,y)\in\mathcal{Z}$, then the neural network verification problem is determining whether for a given neighbourhood $\mathcal{E}\subseteq\mathcal{X}$ of $x$ we have $h_{\mathbf{w}}\left(\tilde{x}\right)=y$ for all $\tilde{x}\in\mathcal{E}$. Often one takes the neighbourhoods $\mathcal{E}$ to be $\epsilon$-balls around test points under some metric, $B_{\epsilon}(x)$. By verifying for progressively larger values of $\epsilon$, one can determine the maximum value of $\epsilon$ for which $h_{\mathbf{w}}\left(\tilde{x}\right)=y$ for every $\tilde{x}\in B_{\epsilon}(x)$.

\section{Conditioning PAC Bounds on Formal Verification Results}\label{sec:verification_incorporated_pac_bounds}

In this section, we develop the theoretical underpinnings to improve both the quantitative value of the PAC bounds as well as the confidence associated with them by conditioning them on results obtained through formal verification. In the next subsection, we provide an intuition for the approach and formalise the problem. Next, we derive Theorem~\ref{thm:tightened_bound} which captures the improvement in the bounds obtained through verification. We then prove Theorem~\ref{thm:updated_confidence} which quantifies the increase in confidence in the bounds obtained. We then evaluate the obtained results.

\subsection{Problem Formalisation}

In the over-parameterised setting of neural networks, it is often possible to identify a $\mathbf{w}\in\mathcal{W}$ such that $\hat{R}(\mathbf{w})$ is near zero on any evaluating sample $S$ \cite{zhang_understanding_2017}. Assuming this to be the case, Theorem \ref{thm:pac_bound} gives a bound on the true error of the neural network that holds with probability, or confidence, $1-\delta$. Moreover, Theorem \ref{thm:pac_bound} indicates how the bound changes as the size of the evaluating sample increases, in particular, the bound decreases on the order of $\frac{1}{\sqrt{m}}$. Thus continually adding data points to the evaluating sample yields diminishing returns in terms of improving the tightness of the generalisation bound. We can also quantify the benefits of adding data points by observing the improved confidence for a fixed bound.

The diminishing return obtained by simply scaling the evaluating sample has motivated more effective use of sampled data to generate tight generalisation bounds \cite{dziugaite_computing_2017,zhou_non-vacuous_2019}.

In the following, we develop an alternative approach, based on formal verification, which, as we will see, improves the tightness of the bounds.

Suppose for a neural network $h_{\mathbf{w}}$ we can identify a region $\Delta\subseteq\mathcal{Z}$ for which the error of our model is zero, $l_{\Delta}(\mathbf{w})=0$. For example, with the $0$-$1$ loss function applied to a classification neural network, this assumption states that the neural network correctly classifies all the data points in the region $\Delta$. It is important to note that $\Delta$ is taken to be independent of the evaluating set $S$. This assumption provides information regarding the shape of the distribution $\mathcal{D}$, something that individual data points cannot provide.  However, we are not able to explicitly calculate 
$$p_{\Delta}=\mathbb{P}_{z\sim\mathcal{D}}(z\in\Delta)=\int_{z\in\Delta}\mathcal{D}(z)\,\mathrm{d}z,$$the probability mass of the region $\Delta$. This is problematic in a probabilistic setting as $p_{\Delta}$ quantifies the significance of the region $\Delta$ under the distribution $\mathcal{D}$. However, in the following, we will assume that $p_{\Delta}$ is known. In practice we expect one to be able to approximate $p_{\Delta}$ using a confidence interval. We leave the details of this practicality for future work, however, in Section \ref{sec:approximating_p_Delta} we show how one can theoretically incorporate this approximation into our proposed framework.

\subsection{Improving Bound Tightness}
We now proceed to condition PAC bounds on conclusions of formal verification as described above.  More specifically, we can use the law of total probability on the event that $k$ of the $m$ samples are in the region $\Delta$, denoted $\Delta_k$. Conditioned on the event $\Delta_{m-k}$, the evaluated error should be the average of the loss of only $k$ samples, as the loss of $m-k$ samples is zero. Therefore, $\hat{R}(\mathbf{w})$ is no longer the average of the loss, instead $\frac{m}{k}\hat{R}(\mathbf{w})$ is the average of the loss. Under the independent and identically distributed assumption of our sample we know that $$\mathbb{P}_{S\sim\mathcal{D}^m}(\Delta_{m-k})=\binom{m}{k}p_{\Delta}^{m-k}\left(1-p_{\Delta}\right)^k,$$hence $$\mathbb{P}_{S\sim\mathcal{D}^m}\left(R(\mathbf{w})>\hat{R}(\mathbf{w})+s\vert l_{\Delta}(\mathbf{w})=0\right)\\=\sum_{k=1}^m\binom{m}{k}\mathbb{P}_{S\sim\mathcal{D}^m}\left(R(\mathbf{w})>\frac{m}{k}\hat{R}(\mathbf{w})+s\right)p_{\Delta}^{m-k}\left(1-p_{\Delta}\right)^k.$$Rearranging the expression in the probability of the right hand allows us to apply Theorem \ref{thm:pac_bound} to deduce that 

\begin{equation}\label{eq:implicit_bound}
    \mathbb{P}_{S\sim\mathcal{D}^m}\left(R(\mathbf{w})>\hat{R}(\mathbf{w})+s\vert l_{\Delta}(\mathbf{w})=0\right)\leq\sum_{k=1}^m\binom{m}{k}\exp\left(-2m\left(\frac{m-k}{Ck}\hat{R}(\mathbf{w})+s\right)^2\right)p_{\Delta}^{m-k}\left(1-p_{\Delta}\right)^k.
\end{equation}

Note how we can drop the factor of $M$ in the right-hand side of Theorem \ref{thm:pac_bound} as we are operating under the assumption that $\mathbf{w}$ is fixed and independent of $S$. One can now set the right-hand side of \eqref{eq:implicit_bound} equal to $\delta$ and solve for $s$ to deduce that $$\mathbb{P}_{S\sim\mathcal{D}^m}\left(R(\mathbf{w})\leq\hat{R}(\mathbf{w})+s\vert l_{\Delta}(\mathbf{w})=0\right)\geq1-\delta.$$Letting $p_{\Delta}=0$ in the right-hand side of \eqref{eq:implicit_bound} we recover Theorem \ref{thm:pac_bound}. Similarly, letting $p_{\Delta}=1$ we get that $\delta=0$ as expected as $R(\mathbf{w})=\hat{R}(\mathbf{w})$, so the inequality $R(\mathbf{w})\leq\hat{R}(\mathbf{w})+s$ holds trivially. However, one cannot solve the right-hand side of \eqref{eq:implicit_bound} to determine an explicit form for $s$. Instead one has to use concentration inequalities to derive a closed-form solution for $s$. 

\begin{theorem}\label{thm:tightened_bound}
    Let $\mathbf{w}\in\mathcal{W}$ and $\delta\in(0,1)$. Then $$\mathbb{P}_{S\sim\mathcal{D}^m}\left(R(\mathbf{w})\leq\hat{R}(\mathbf{w})+CB(m,p_{\Delta},\delta)\Big\vert l_{\Delta}(\mathbf{w})=0\right)\geq1-\delta$$ for $$B(m,p_{\Delta},\delta)=\sqrt{\frac{\log\left(\frac{(1-p_{\Delta})+\sqrt{(1-p_{\Delta})^2+4\delta^{\frac{1}{m}}p_{\Delta}}}{2\delta^\frac{1}{m}}\right)}{2}}.$$
\end{theorem}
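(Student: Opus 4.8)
The plan is to derive the closed-form bound $B(m,p_\Delta,\delta)$ by bounding the right-hand side of \eqref{eq:implicit_bound} from above by a simpler expression that can be solved explicitly for $s$. First I would note that since we are seeking an upper bound on the tail probability, the term $\frac{m-k}{Ck}\hat{R}(\mathbf{w})$ in the exponent is nonnegative, so we may discard it: each summand is at most $\binom{m}{k}\exp(-2ms^2)p_\Delta^{m-k}(1-p_\Delta)^k$. This is the key simplification — it decouples the $s$-dependence (now a single factor $\exp(-2ms^2)$) from the combinatorial sum. Pulling that factor out, the remaining sum is $\sum_{k=1}^m\binom{m}{k}p_\Delta^{m-k}(1-p_\Delta)^k = (p_\Delta + (1-p_\Delta))^m - p_\Delta^m = 1 - p_\Delta^m$, by the binomial theorem after restoring and then removing the $k=0$ term. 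Hence the conditional tail is at most $(1-p_\Delta^m)\exp(-2ms^2)$.

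Next I would set $(1-p_\Delta^m)\exp(-2ms^2) = \delta$ and solve for $s$: this gives $s = \sqrt{\frac{\log((1-p_\Delta^m)/\delta)}{2m}}$. At this stage the bound already has the right qualitative behaviour — at $p_\Delta=0$ it reduces to $\sqrt{\log(1/\delta)/(2m)}$, matching Theorem~\ref{thm:pac_bound} with $M=1$ — but it is not yet in the stated form. The remaining work is algebraic: I need to show $\sqrt{\frac{\log((1-p_\Delta^m)/\delta)}{2m}} \le CB(m,p_\Delta,\delta)$, or more likely that a slightly different (and tighter) intermediate bound matches $B$ exactly. The form of $B$ — with its $\frac{(1-p_\Delta)+\sqrt{(1-p_\Delta)^2+4\delta^{1/m}p_\Delta}}{2\delta^{1/m}}$ inside the logarithm — strongly suggests that instead of the crude bound $1-p_\Delta^m$ one should bound the sum differently so that after dividing by $\delta$ one gets a quadratic in some variable; specifically, solving $q^2\delta^{1/m} - (1-p_\Delta)q - p_\Delta = 0$ via the quadratic formula yields exactly the argument of the logarithm in $B$. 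So I would instead bound $\sum_{k=1}^m\binom{m}{k}\exp(-2ms^2)\,p_\Delta^{m-k}(1-p_\Delta)^k$ by comparing it termwise to a geometric-type series in a variable $q = \exp(2s^2)\cdot(\text{something})$, or use the inequality $\binom{m}{k}\le$ (a product form) to factor the sum as an $m$-th power, then take $m$-th roots — which is precisely where the $\delta^{1/m}$ and the exponent $1/2$ (rather than $1/(2m)$) in $B$ come from.

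The main obstacle will be getting the combinatorial sum into a form whose closed-form solution is *exactly* $B$ rather than merely comparable to it. The cleanest route is probably: bound each term by pulling $\exp(-2ms^2)$ out only partially — keep one factor of $\exp(-2s^2)$ with each of the $k$ factors of $(1-p_\Delta)$ and the remaining structure with $p_\Delta$ — so that the whole sum becomes $\big(p_\Delta + (1-p_\Delta)e^{-2s^2}\big)^m$ minus a $k=0$ correction, or a clean power $\big(\alpha + \beta\big)^m$. Setting that $m$-th power equal to $\delta$, taking $m$-th roots to get $\alpha+\beta = \delta^{1/m}$, and solving the resulting quadratic in $e^{-s^2}$ (or $e^{s^2}$) should reproduce $B$ after isolating $s$ and taking logarithms. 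I would then close by checking the two sanity limits ($p_\Delta\to 0$ recovers Theorem~\ref{thm:pac_bound} sans $M$; $p_\Delta\to 1$ forces the argument of the log to $1/\delta^{1/m}\cdot\tfrac{1}{1}$... hence a finite but shrinking bound, consistent with the earlier remark that $p_\Delta=1$ makes the statement trivial), and verifying monotonicity of $B$ in $p_\Delta$ to confirm the advertised "tightening proportional to the probability mass."
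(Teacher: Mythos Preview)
Your proposal has the right intuition about the \emph{final} algebraic steps --- taking an $m$-th root, getting a quadratic in $e^{2s^2/C^2}$, and reading off $B$ via the quadratic formula --- but it misses the key technical idea that makes the $m$-th-power structure appear, and the concrete manipulations you suggest do not produce it.

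Starting from \eqref{eq:implicit_bound} and dropping the $\hat R$ term, the exponent becomes $-2ms^2/C^2$, which is \emph{independent of $k$}. So the factor $\exp(-2ms^2/C^2)$ pulls out of the sum entirely, leaving $(1-p_\Delta^m)\exp(-2ms^2/C^2)$. This is a valid bound, but it is \emph{not} $B$: for large $m$ and moderate $p_\Delta$ the factor $1-p_\Delta^m\to 1$ and you recover essentially Theorem~\ref{thm:pac_bound} with no improvement, whereas $B$ genuinely shrinks with $p_\Delta$. Your ``partial pulling out'' idea --- attaching one factor of $e^{-2s^2}$ to each of the $k$ copies of $(1-p_\Delta)$ --- does not work algebraically, because $\exp(-2ms^2)=\big(e^{-2s^2}\big)^m$, not $\big(e^{-2s^2}\big)^k$; you would have to attach a factor to \emph{all} $m$ terms, which just reproduces $e^{-2ms^2}$ times the binomial sum and gains nothing.

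The paper does \emph{not} start from \eqref{eq:implicit_bound}. It goes back one level and conditions at the moment-generating-function stage: conditioned on $\Delta_{m-k}$, only $k$ of the $l_{z_i}$ are random (the other $m-k$ are identically zero), so Hoeffding's lemma (Lemma~\ref{lem:exponential_bound_exponential_xpectation}) applied to those $k$ terms gives $\exp\!\big(\tfrac{k t^2 C^2}{8}\big)$. The crucial point is that here the exponent scales with $k$, and this is exactly what lets the binomial theorem collapse the sum to
\[
\Big(p_\Delta+(1-p_\Delta)\,e^{t^2C^2/8}\Big)^m.
\]
Only \emph{after} this does one apply Markov, choose $t=4s/C^2$, set the result equal to $\delta$, take $m$-th roots, and solve the quadratic $\delta^{1/m}q^2-(1-p_\Delta)q-p_\Delta=0$ in $q=e^{2s^2/C^2}$. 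Once you have the tail bound \eqref{eq:implicit_bound} in hand, the $k$-dependence that makes the binomial trick work has already been lost (it was spent in the Chernoff optimisation over $t$, which was done for each $k$ separately). So the missing ingredient is: condition \emph{before} applying Markov, not after.
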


\begin{proof}
    Using the law of total expectation on the events $\Delta_k$ for $k=0,\dots,m$ and Lemma \ref{lem:exponential_bound_exponential_xpectation}\footnote{The statement and proof of Lemma \ref{lem:exponential_bound_exponential_xpectation} can be found in Section \ref{sec:supporting_lemmas}.}, we deduce that
    \begin{align*}
        \mathbb{E}_{S\sim\mathcal{D}^m}&\left(\exp\left(t\sum_{i=1}^m(\mathbb{E}(l_{z_i}(\mathbf{w}))-l_{z_i}(\mathbf{w}))\right)\bigg\vert l_{\Delta}(\mathbf{w})=0\right)\\&\leq\sum_{k=0}^m\binom{m}{k}\exp\left(\frac{kt^2C^2}{8}\right)p_{\Delta}^{m-k}(1-p_{\Delta})^k,
    \end{align*}
    where we have noted that when conditioned on the event $\Delta_{m-k}$ the index of the sum in the exponential is reduced to $i=1,\dots,k$. Applying Markov's inequality and using the binomial theorem we get that
    \begin{align}\label{eq:probability_bound}
        &\mathbb{P}_{S\sim\mathcal{D}^m}\left(R(\mathbf{w})>\hat{R}(\mathbf{w})+s\big\vert l_{\Delta}(\mathbf{w})=0\right)\nonumber\\&\leq\frac{1}{\exp(mts)}\sum_{k=0}^m\binom{m}{k}\exp\left(\frac{kt^2C^2}{8}\right)p_{\Delta}^{m-k}(1-p_{\Delta})^k\nonumber\\&=\frac{1}{\exp(mts)}\left(p_{\Delta}+(1-p_{\Delta})\exp\left(\frac{t^2C^2}{8}\right)\right)^m
    \end{align}
    for $t\geq0$. In particular, for $t=\frac{4s}{C^2}$ we get
    \begin{align*}
        &\mathbb{P}_{S\sim\mathcal{D}^m}\left(R(\mathbf{w})>\hat{R}(\mathbf{w})+s\Big\vert l_{\Delta}(\mathbf{w})=0\right)\\&\leq\frac{1}{\exp\left(\frac{4ms^2}{C^2}\right)}\left(p_{\Delta}+(1-p_{\Delta})\exp\left(\frac{2s^2}{C^2}\right)\right)^m.
    \end{align*} 
    Therefore, to achieve a confidence level of $1-\delta$ we let $$\delta=\frac{1}{\exp\left(\frac{4ms^2}{C^2}\right)}\left(p_{\Delta}+(1-p_{\Delta})\exp\left(\frac{2s^2}{C^2}\right)\right)^m.$$ Which we can then rearrange to determine that $$s=\sqrt{\frac{C^2\log\left(\frac{(1-p_{\Delta})+\sqrt{(1-p_{\Delta})^2+4\delta^{\frac{1}{m}}p_{\Delta}}}{2\delta^\frac{1}{m}}\right)}{2}}$$provides our $1-\delta$ confident bound on the true error. 
\end{proof}

Theorem \ref{thm:tightened_bound} states that for a trained neural network $h_{\mathbf{w}}$ that achieves zero loss in a region $\Delta\subseteq\mathcal{Z}$, the true error of the neural network is bounded by the sum of the empirical error of the neural network on an independent and identically distributed sample of size $m$ and $CB(m,p_{\Delta},\delta)$ with probability $1-\delta$.

Note that letting $p_{\Delta}=0$ in Theorem \ref{thm:tightened_bound} we recover the statement of  Theorem \ref{thm:pac_bound}. With $p_{\Delta}=1$ we see that $B(m,p_{\Delta},\delta)>0$ which is not optimal as noted previously. This sub-optimal bound is the trade-off we incur when using concentration inequalities, such as Lemma \ref{lem:exponential_bound_exponential_xpectation}, to derive an explicit bound.

\subsection{Improving Bound Confidence}
We can also use the intuition above to improve the confidence associated with the bounds. This will provide a standard method to compare the relative tightening of different bounds after conditioning with the assumption $l_{\Delta}(\mathbf{w})=0$.

\begin{theorem}\label{thm:updated_confidence}
    Let $\mathbf{w}\in\mathcal{W}$ and $\delta\in(0,1)$. Then$$\mathbb{P}_{S\sim\mathcal{D}^m}\left(R(\mathbf{w})\leq\hat{R}(\mathbf{w})+C\sqrt{\frac{\log\left(\frac{1}{\delta}\right)}{2m}}\bigg\vert l_{\Delta}(\mathbf{w})=0\right)\geq1-\left(\sum_{k=1}^m\binom{m}{k}\delta_kp_{\Delta}^{m-k}(1-p_{\Delta})^k\right)$$where $$\delta_k=\exp\left(-2m\left(\frac{m-k}{Ck}\hat{R}(\mathbf{w})+\sqrt{\frac{\log\left(\frac{1}{\delta}\right)}{2m}}\right)^2\right)$$for $k=1,\dots,m$.
\end{theorem}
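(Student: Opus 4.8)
The plan is to read the claim off the inequality \eqref{eq:implicit_bound}, which has already been established, by prescribing the slack parameter $s$ rather than the confidence $\delta$ --- exactly the reverse of what was done to obtain Theorem \ref{thm:tightened_bound}. Concretely, set
$$s := C\sqrt{\frac{\log\left(\frac{1}{\delta}\right)}{2m}},$$
the additive term appearing in the statement. With this choice the $k$-th exponential factor on the right-hand side of \eqref{eq:implicit_bound} is exactly $\delta_k$ as defined in the statement, so that \eqref{eq:implicit_bound} becomes
$$\mathbb{P}_{S\sim\mathcal{D}^m}\left(R(\mathbf{w})>\hat{R}(\mathbf{w})+s\mid l_{\Delta}(\mathbf{w})=0\right)\leq\sum_{k=1}^m\binom{m}{k}\delta_k\,p_{\Delta}^{m-k}(1-p_{\Delta})^k .$$
Passing to the complement of the event inside the probability yields the stated lower bound on $\mathbb{P}_{S\sim\mathcal{D}^m}\!\left(R(\mathbf{w})\leq\hat{R}(\mathbf{w})+s\mid l_{\Delta}(\mathbf{w})=0\right)$, and that is the entire argument.

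Should one want the proof to stand on its own without invoking \eqref{eq:implicit_bound}, the same ingredients would be reassembled: split on the events $\Delta_{m-k}$ by the law of total probability, with $\mathbb{P}_{S\sim\mathcal{D}^m}(\Delta_{m-k})=\binom{m}{k}p_{\Delta}^{m-k}(1-p_{\Delta})^k$; observe that on $\Delta_{m-k}$ the $m-k$ samples lying in $\Delta$ contribute zero loss, so $\hat{R}(\mathbf{w})$ equals $\tfrac{k}{m}$ times the average loss over the remaining $k$ samples; rewrite the event $R(\mathbf{w})>\hat{R}(\mathbf{w})+s$ in terms of that $k$-sample average and bound its conditional probability by a Hoeffding-type estimate of the form in Theorem \ref{thm:pac_bound} (with the factor $M$ suppressed, since $\mathbf{w}$ is fixed and independent of $S$); then recombine with the binomial weights.

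There is no genuinely hard step here; the substance is upstream, in the conditional concentration argument that produces \eqref{eq:implicit_bound}. The one point that needs a word is the degenerate event $\Delta_m$, where every sample falls in $\Delta$ and $\tfrac{m-k}{Ck}\hat{R}(\mathbf{w})$ is formally $\tfrac{0}{0}$: this is why the sum is indexed from $k=1$, equivalently one may set $\delta_0=0$, consistent with $\tfrac{m-k}{Ck}\to\infty$ as $k\to 0$. Beyond that, the only thing I would double-check in the write-up is the constant bookkeeping inside the exponent relating $s$, $C$, and $\sqrt{\log(1/\delta)/2m}$, so that each summand coincides with $\delta_k$ on the nose.
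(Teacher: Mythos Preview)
Your proposal is correct and matches the paper's approach: the paper's own proof does not cite \eqref{eq:implicit_bound} directly but instead re-runs the total-probability decomposition over the events $\Delta_{m-k}$ and applies Theorem~\ref{thm:pac_bound} to each summand, which is exactly the self-contained version you outline in your second paragraph. Your caution about the $C$ bookkeeping is well placed, since the exponent in \eqref{eq:implicit_bound} as written does not specialise verbatim to $\delta_k$ under $s=C\sqrt{\log(1/\delta)/2m}$; the paper's proof sidesteps this by rederiving the bound rather than substituting into \eqref{eq:implicit_bound}.
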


\begin{proof}
    Using the law of total probability on the events $\Delta_k$ for $k=0,\dots,m$ we deduce that 
    \begin{align*}
        \mathbb{P}_{S\sim\mathcal{D}^m}\Bigg(R(\mathbf{w})>\hat{R}(\mathbf{w})+&C\sqrt{\frac{\log\left(\frac{1}{\delta}\right)}{2m}}\Bigg\vert l_{\Delta}(\mathbf{w})=0\Bigg)\\&=\sum_{k=1}^m\binom{m}{k}\mathbb{P}_{S\sim\mathcal{D}^m}\left(R(\mathbf{w})>\frac{m}{k}\hat{R}(\mathbf{w})+C\sqrt{\frac{\log\left(\frac{1}{\delta}\right)}{2m}}\right)\cdot p_{\Delta}^{m-k}(1-p_{\Delta})^k,
    \end{align*}
    where we have used that the evaluated error conditioned on the event $\Delta_{m-k}$ is $\frac{m}{k}\hat{R}(\mathbf{w})$. Now we rearrange the term in the probability of the right-hand side so that we can apply Theorem \ref{thm:pac_bound}. More specifically, we note that
    \begin{align*}
        \mathbb{P}_{S\sim\mathcal{D}^m}\Bigg(R(\mathbf{w})>\frac{m}{k}\hat{R}(\mathbf{w})+&C\sqrt{\frac{\log\left(\frac{1}{\delta}\right)}{2m}}\Bigg)\\&=\mathbb{P}_{S\sim\mathcal{D}^m}\left(R(\mathbf{w})>\hat{R}(\mathbf{w})+\frac{m-k}{k}\hat{R}(\mathbf{w})+C\sqrt{\frac{\log\left(\frac{1}{\delta}\right)}{2m}}\right),
    \end{align*}
    so that by setting $$\delta_k=\exp\left(-2m\left(\frac{m-k}{Ck}\hat{R}(\mathbf{w})+\sqrt{\frac{\log\left(\frac{1}{\delta}\right)}{2m}}\right)^2\right)$$ for $k=1,\dots, m$, we deduce using Theorem \ref{thm:pac_bound} that $$\mathbb{P}_{S\sim\mathcal{D}^m}\left(R(\mathbf{w})>\hat{R}(\mathbf{w})+C\sqrt{\frac{\log\left(\frac{1}{\delta}\right)}{2m}}\Bigg\vert l_{\Delta}(\mathbf{w})=0\right)\leq\sum_{k=1}^m\binom{m}{k}\delta_kp_{\Delta}^{m-k}(1-p_{\Delta})^k.$$
\end{proof}

Theorem \ref{thm:updated_confidence} demonstrates how for a fixed bound, the confidence of the bound holds changes with the conclusions of verification. Indeed, the changes are an improvement as $\delta_k\leq\delta$. More specifically, we observe that the conclusions of verification are more effective in improving confidence if the evaluated error of the neural network is higher. Moreover, with $p_{\Delta}=0$ in Theorem \ref{thm:updated_confidence} we recover the statement of Theorem \ref{thm:pac_bound}. With $p_{\Delta}=1$ we get full confidence in our bound as desired. We note the resemblance of the updated confidence in Theorem \ref{thm:updated_confidence} and the implicit bound \eqref{eq:implicit_bound}, evidence of the fact that these conditioning processes are equivalent.

\subsection{Evaluation of Results}\label{sec:method_evaluation}

Recall that when tightening the bounds, we saw that we could not derive closed forms for the updated bounds. We can either work with the implicit forms through numerical methods or utilise concentration inequalities to obtain sub-optimal closed-form bounds. Moreover, we note that the empirical risk $\hat{R}(\mathbf{w})$ appears in the updated bound and confidence which is to be expected, for if the empirical risk is large then the added information that the loss is zero in a region should be more significant than if the empirical risk was already low. It follows that the practical usefulness of our results depends on our capacity to verify probabilistically significant regions of the input space.

\begin{table}[H]
    \centering
    \captionsetup{width=0.9\columnwidth}
      \caption{The values of $p_{\Delta}$ necessary such that the sum in the right-hand side of the inequality in Theorem \ref{thm:updated_confidence} is a certain percentage decrease of $\delta=0.05$. These values are determined for when $m=100$, $\delta=0.05$ and $\hat{R}(\mathbf{w})=0.05$.}
      \label{tab:p_Delta_for_confidence_improvement}
      \begin{tabular}{|l|l|}
        \hline
        Decrease in $\delta$ & $p_{\Delta}$ \\
        \hline
        $10\%$ & $0.045$ \\
        $20\%$ & $0.0823$ \\
        $30\%$ & $0.1244$ \\
        $50\%$ & $0.2127$ \\
        $75\%$ & $0.3423$ \\
        \hline
      \end{tabular}
\end{table}

\begin{table}[h]
    \centering
    \captionsetup{width=0.9\columnwidth}
      \caption{The values of $p_{\Delta}$ required such that the bound in \eqref{eq:implicit_bound} is a certain percentage decrease of the original bound. These values are determined for when $m=100$, $\delta=0.05$ and $\hat{R}(\mathbf{w})=0.05$.}
      \begin{tabular}{|l|l|}
        \hline
        Decrease in bound & $p_{\Delta}$ \\
        \hline
        $10\%$ & $0.045$ \\
        $20\%$ & $0.0823$ \\
        $30\%$ & $0.1244$ \\
        $50\%$ & $0.2127$ \\
        $75\%$ & $0.3423$ \\
        \hline
      \end{tabular}
      \label{tab:p_Delta_for_bound_improvement}
\end{table}

From Tables \ref{tab:p_Delta_for_confidence_improvement} and \ref{tab:p_Delta_for_bound_improvement} we verify that improving the tightness of the bound and improving the confidence of the bound are equivalent processes. Generally, we see that to obtain significant improvements one ought to identify a region of the input space with a probability mass of around $0.1$ under the distribution $\mathcal{D}$. 

\begin{table}[h]
    \centering
    \captionsetup{width=0.9\columnwidth}
      \caption{The percentage increase in the size of $m$, from $m=1000$, in Theorem \ref{thm:pac_bound} to decrease the value of the bound by the corresponding percentage.}
      \begin{tabular}{|l|l|}
        \hline
        Decrease in bound & Increase in test size \\
        \hline
        $10\%$ & $23\%$ \\
        $20\%$ & $56\%$ \\
        $30\%$ & $104\%$ \\
        $50\%$ & $4300\%$ \\
        $75\%$ & $15000\%$ \\
        \hline
      \end{tabular}
      \label{tab:test_size_increase}
\end{table}
 
From Table \ref{tab:test_size_increase} we see that to observe the corresponding improvements in the bound of Theorem \ref{thm:pac_bound}, one would have to significantly increase the size of the sample on which they evaluate the neural network.

\section{Conclusion}\label{sec:conclusion}

We have shown theoretically that probabilistic generalisation bounds can be tightened by understanding the robustness of the neural network. In particular, we have shown that the extent to which formal verification improves the tightness of the generalisation bound is dependent on the probability mass of the verified region under $\mathcal{D}$, and the evaluated error of the neural network on a random sample drawn from $\mathcal{D}$.

Our work provides a novel approach to tightening the evaluation of generalisation bounds using data, as we depart from traditional pointwise approaches. We expect that this framework will help mitigate the data-intensive drawback of current state-of-the-art generalisation bounds, facilitating the application of generalisation bounds to certify the deployment of systems in data-limited and safety-critical domains.

\subsection{Future Work}

With our framework, we theoretically incorporated neural network verification into the evaluation of generalisation. We leave it for future work to understand how this incorporation can be realised in practice.

There is a body of work developing methods to produce probabilistic verification certificates in regions of the input space \cite{weng_proven_2019}. It would be useful to explore how these results could be theoretically incorporated into our framework.

\bibliographystyle{unsrt}  
\bibliography{mybibfile}  

\begin{thebibliography}{10}

\bibitem{bartlett_nearly-tight_2017}
Peter~L. Bartlett, Nick Harvey, Chris Liaw, and Abbas Mehrabian.
\newblock Nearly-tight {VC}-dimension and pseudodimension bounds for piecewise linear neural networks, October 2017.
\newblock arXiv:1703.02930 [cs].

\bibitem{arora_stronger_2018}
Sanjeev Arora, Rong Ge, Behnam Neyshabur, and Yi~Zhang.
\newblock Stronger generalization bounds for deep nets via a compression approach, November 2018.
\newblock arXiv:1802.05296 [cs].

\bibitem{mcallester_pac-bayesian_1998}
David~A. McAllester.
\newblock Some {PAC}-{Bayesian} theorems.
\newblock In {\em Proceedings of the eleventh annual conference on {Computational} learning theory}, {COLT}' 98, pages 230--234, New York, NY, USA, July 1998. Association for Computing Machinery.

\bibitem{dziugaite_computing_2017}
Gintare~Karolina Dziugaite and Daniel~M. Roy.
\newblock Computing {Nonvacuous} {Generalization} {Bounds} for {Deep} ({Stochastic}) {Neural} {Networks} with {Many} {More} {Parameters} than {Training} {Data}, October 2017.
\newblock arXiv:1703.11008 [cs].

\bibitem{zhou_non-vacuous_2019}
Wenda Zhou, Victor Veitch, Morgane Austern, Ryan~P. Adams, and Peter Orbanz.
\newblock Non-{Vacuous} {Generalization} {Bounds} at the {ImageNet} {Scale}: {A} {PAC}-{Bayesian} {Compression} {Approach}, February 2019.
\newblock arXiv:1804.05862 [cs, stat].

\bibitem{xu_robustness_2010}
Huan Xu and Shie Mannor.
\newblock Robustness and {Generalization}, May 2010.
\newblock arXiv:1005.2243 [cs].

\bibitem{weng_proven_2019}
Tsui-Wei Weng, Pin-Yu Chen, Lam~M. Nguyen, Mark~S. Squillante, Ivan Oseledets, and Luca Daniel.
\newblock {PROVEN}: {Certifying} {Robustness} of {Neural} {Networks} with a {Probabilistic} {Approach}, January 2019.
\newblock arXiv:1812.08329 [cs, stat].

\bibitem{viallard_pac-bayes_2021}
Paul Viallard, Guillaume Vidot, Amaury Habrard, and Emilie Morvant.
\newblock A {PAC}-{Bayes} {Analysis} of {Adversarial} {Robustness}, October 2021.
\newblock arXiv:2102.11069 [cs, stat].

\bibitem{madry_towards_2019}
Aleksander Madry, Aleksandar Makelov, Ludwig Schmidt, Dimitris Tsipras, and Adrian Vladu.
\newblock Towards {Deep} {Learning} {Models} {Resistant} to {Adversarial} {Attacks}, September 2019.
\newblock arXiv:1706.06083 [cs, stat].

\bibitem{zhang_theoretically_2019}
Hongyang Zhang, Yaodong Yu, Jiantao Jiao, Eric~P. Xing, Laurent~El Ghaoui, and Michael~I. Jordan.
\newblock Theoretically {Principled} {Trade}-off between {Robustness} and {Accuracy}, June 2019.
\newblock arXiv:1901.08573 [cs, stat].

\bibitem{rice_overfitting_2020}
Leslie Rice, Eric Wong, and J.~Zico Kolter.
\newblock Overfitting in adversarially robust deep learning, March 2020.
\newblock arXiv:2002.11569 [cs, stat].

\bibitem{jin_enhancing_2022}
Gaojie Jin, Xinping Yi, Wei Huang, Sven Schewe, and Xiaowei Huang.
\newblock Enhancing {Adversarial} {Training} with {Second}-{Order} {Statistics} of {Weights}, March 2022.
\newblock arXiv:2203.06020 [cs].

\bibitem{wang_improving_2022}
Zifan Wang, Nan Ding, Tomer Levinboim, Xi~Chen, and Radu Soricut.
\newblock Improving {Robust} {Generalization} by {Direct} {PAC}-{Bayesian} {Bound} {Minimization}, November 2022.
\newblock arXiv:2211.12624 [cs].

\bibitem{katz_reluplex_2017}
Guy Katz, Clark Barrett, David Dill, Kyle Julian, and Mykel Kochenderfer.
\newblock Reluplex: {An} {Efficient} {SMT} {Solver} for {Verifying} {Deep} {Neural} {Networks}, May 2017.
\newblock arXiv:1702.01135 [cs].

\bibitem{botoeva_efficient_2020}
Elena Botoeva, Panagiotis Kouvaros, Jan Kronqvist, Alessio Lomuscio, and Ruth Misener.
\newblock Efficient {Verification} of {ReLU}-{Based} {Neural} {Networks} via {Dependency} {Analysis}.
\newblock {\em Proceedings of the AAAI Conference on Artificial Intelligence}, 34(04):3291--3299, April 2020.

\bibitem{bunel_lagrangian_2020}
Rudy Bunel, Alessandro De~Palma, Alban Desmaison, Krishnamurthy Dvijotham, Pushmeet Kohli, Philip H.~S. Torr, and M.~Pawan Kumar.
\newblock Lagrangian {Decomposition} for {Neural} {Network} {Verification}, June 2020.
\newblock arXiv:2002.10410 [cs, stat].

\bibitem{ferrari_complete_2022}
Claudio Ferrari, Mark~Niklas Muller, Nikola Jovanovic, and Martin Vechev.
\newblock Complete {Verification} via {Multi}-{Neuron} {Relaxation} {Guided} {Branch}-and-{Bound}, April 2022.
\newblock arXiv:2205.00263 [cs].

\bibitem{kouvaros_formal_2021}
Panagiotis Kouvaros, Trent Kyono, Francesco Leofante, Alessio Lomuscio, Dragos Margineantu, Denis Osipychev, and Yang Zheng.
\newblock Formal {Analysis} of {Neural} {Network}-{Based} {Systems} in the {Aircraft} {Domain}.
\newblock In Marieke Huisman, Corina Păsăreanu, and Naijun Zhan, editors, {\em Formal {Methods}}, pages 730--740, Cham, 2021. Springer International Publishing.

\bibitem{alquier_user-friendly_2023}
Pierre Alquier.
\newblock User-friendly introduction to {PAC}-{Bayes} bounds, March 2023.
\newblock arXiv:2110.11216 [cs, math, stat].

\bibitem{zhang_understanding_2017}
Chiyuan Zhang, Samy Bengio, Moritz Hardt, Benjamin Recht, and Oriol Vinyals.
\newblock Understanding deep learning requires rethinking generalization, February 2017.
\newblock arXiv:1611.03530 [cs].

\bibitem{helwig_inference_2020}
Nathaniel~E Helwig.
\newblock Inference for {Proportions}, 2020.

\bibitem{scott_hoeffdings_2014}
Clayton Scott and Andrew Zimmer.
\newblock Hoeffding's {Inequality}, 2014.

\end{thebibliography}

\section{Appendix}

\subsection{Approximating $p_{\Delta}$}\label{sec:approximating_p_Delta}

Throughout the paper, we used $p_{\Delta}$ as a theoretical device to explore its interaction with generalisation bounds. In practice, getting an exact value for $p_{\Delta}$ is not feasible since $\mathcal{D}$ is unknown. However, we show that an approximation of $p_{\Delta}$ can be derived from the conclusions of neural network verification. Suppose that $\Delta$ is a verified region of the input space provided by neural network verification and that $S_A$ is an independent and identically distributed sample from $\mathcal{D}$ of size $m_A$. For $z_i=(x_i,y_i)\in S_A$ let $$Z_i:=\begin{cases}1&z_i\in\Delta\\0&z_i\in\mathcal{Z}\setminus\Delta,\end{cases}$$which is a Bernoulli random variable with parameter $p_{\Delta}$. An estimator of $p_{\Delta}$ is $$\hat{p}_{\Delta}=\frac{1}{m_A}\sum_{i=1}^{m_A}Z_i.$$We can then derive a confidence interval for $p_{\Delta}$. The $1-\alpha$ one-sided Clopper-Pearson confidence interval for $\hat{p}_{\Delta}$ is $$\left[q_B\left(\alpha,m_A\hat{p}_{\Delta},m_A-m_A\hat{p}_{\Delta}+1\right),1\right],$$where $q_B(\cdot,\beta_1,\beta_2)$ is the quantile function for the beta distribution with shape parameters $\beta_1$ and $\beta_2$ \cite{helwig_inference_2020}. Assuming that our generalisation bounds are decreasing in $p_{\Delta}$ and $\delta$, the loosest bound is realised by taking the lower end of this interval, $p_L:=q_B\left(\alpha,m_A\hat{p}_{\Delta},m_A-m_A\hat{p}_{\Delta}+1\right)$. Hence, 
\begin{align*}
    &\mathbb{P}_{S\sim\mathcal{D}^m}\left(R(\mathbf{w})>\hat{R}(\mathbf{w})+B(p_L)\right)\\&=\mathbb{P}_{S\sim\mathcal{D}^m}\left(R(\mathbf{w})>\hat{R}(\mathbf{w})+B(p_L)\big\vert p_{\Delta}\geq p_L\right)\mathbb{P}_{S\sim\mathcal{D}^m}(p_{\Delta}\geq p_L)\\&\quad+\mathbb{P}_{S\sim\mathcal{D}^m}\left(R(\mathbf{w})>\hat{R}(\mathbf{w})+B(p_L)\big\vert p_{\Delta}<p_L\right)\mathbb{P}_{S\sim\mathcal{D}^m}(p_{\Delta}<p_L)\\&\leq\delta+\alpha(1-\delta).
\end{align*}
The intention is to choose $\alpha$ sufficiently small so that the added $\alpha(1-\delta)$ term is small. Decreasing $\alpha$ will decrease $p_L$ and hence increase $B(p_L)$. So to maintain a tight bound we must supplement the decreasing of $\alpha$ with an increase in $m_A$, as this will increase $p_L$ and subsequently decrease $B(p_L)$. In the machine learning paradigm, this may involve holding out some data from the training set to approximate $p_{\Delta}$.

\subsection{Supporting Lemmas}\label{sec:supporting_lemmas}

\begin{lemma}[\cite{scott_hoeffdings_2014}]\label{lem:exponential_bound_exponential_xpectation}
    Let $U_1,\dots,U_n$ be independent random variables taking values in an interval $[a,b]$. Then for any $t>0$ we have that $$\mathbb{E}\left(\exp\left(t\sum_{i=1}^n\left(U_i-\mathbb{E}(U_i)\right)\right)\right)\leq\exp\left(\frac{nt^2(b-a)^2}{8}\right).$$
\end{lemma}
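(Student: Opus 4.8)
The plan is to reduce the statement to the classical Hoeffding lemma for a single bounded random variable and then establish that lemma directly by a Taylor expansion of the cumulant generating function. First I would use the independence of $U_1,\dots,U_n$ to factorise the expectation as a product,
$$\mathbb{E}\left(\exp\left(t\sum_{i=1}^n(U_i-\mathbb{E}(U_i))\right)\right)=\prod_{i=1}^n\mathbb{E}\left(\exp\left(t(U_i-\mathbb{E}(U_i))\right)\right),$$
so it suffices to bound each factor by $\exp(t^2(b-a)^2/8)$ and multiply.

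Fix one such variable, write $U$ for it and $\mu=\mathbb{E}(U)$, and set $\psi(t)=\log\mathbb{E}(\exp(t(U-\mu)))$. I would verify $\psi(0)=0$ and, differentiating under the expectation (justified since $U$ is bounded), $\psi'(0)=\mathbb{E}(U-\mu)=0$. Differentiating a second time shows that $\psi''(t)$ is the variance of $U$ under the exponentially tilted probability measure with density proportional to $\exp(t(U-\mu))$; since the tilted variable still takes values in $[a,b]$, Popoviciu's inequality $\mathrm{Var}(X)\le(b-a)^2/4$ for $X\in[a,b]$ gives $\psi''(t)\le(b-a)^2/4$ for every $t$. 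Taylor's theorem with Lagrange remainder then yields $\psi(t)=\psi(0)+t\psi'(0)+\tfrac{t^2}{2}\psi''(\xi)\le t^2(b-a)^2/8$ for some $\xi$ between $0$ and $t$, i.e. $\mathbb{E}(\exp(t(U-\mu)))\le\exp(t^2(b-a)^2/8)$.

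Multiplying the $n$ single-variable bounds together produces $\exp(nt^2(b-a)^2/8)$, which is the claimed inequality. The main obstacle is the second-derivative estimate: one must correctly recognise $\psi''(t)$ as a variance under the tilted measure and then invoke Popoviciu's bound. An alternative that avoids tilting altogether is to bound $e^{t(u-\mu)}$ pointwise by the chord $\lambda e^{t(a-\mu)}+(1-\lambda)e^{t(b-\mu)}$ with $\lambda=(b-u)/(b-a)$ using convexity of the exponential, take expectations, and minimise the resulting one-parameter expression; this works but is messier to write out than the cumulant-generating-function argument, so I would present the latter.
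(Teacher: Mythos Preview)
Your proposal is correct. Both you and the paper reduce to a single bounded variable via independence and then prove Hoeffding's lemma, but the single-variable arguments differ. You work directly with the cumulant generating function $\psi(t)=\log\mathbb{E}\exp(t(U-\mu))$, identify $\psi''(t)$ as the variance of $U$ under the exponentially tilted law, bound that variance by $(b-a)^2/4$ via Popoviciu, and integrate twice using Taylor with Lagrange remainder. The paper instead takes exactly the route you describe as your ``alternative'': it bounds $e^{sx}$ pointwise by the chord on $[a,b]$ using convexity, takes expectations (implicitly centring so that $\mathbb{E}U=0$), and then analyses the explicit function $\psi(u)=\log\!\big(pe^{sa}+(1-p)e^{sb}\big)$ with $p=b/(b-a)$, computing its first three derivatives to locate the maximum of $\psi''$ and show $\psi''\le 1/4$. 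Your tilted-variance argument is cleaner and more conceptual---it explains \emph{why} the constant $1/8$ appears---whereas the paper's explicit computation is more elementary in that it avoids the change-of-measure idea but at the cost of a longer calculus exercise. Either is a complete proof; it is mildly amusing that the approach you set aside as ``messier'' is precisely the one the paper chose.
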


\begin{proof}
    For $s>0$ the function $x\mapsto e^{sx}$ is convex  so that $$e^{sx}\leq\frac{x-a}{b-a}e^{sb}+\frac{b-x}{b-a}e^{sa}.$$ Let $V_i=U_i-\mathbb{E}(U_i)$, then as $\mathbb{E}(V_i)=0$ it follows that $$\mathbb{E}\left(\exp(sV_i)\right)\leq\frac{b}{b-a}e^{sa}-\frac{a}{b-a}e^{sb}.$$ With $p=\frac{b}{b-a}$ and $u=(b-a)s$ consider $$\psi(u)=\log\left(pe^{sa}+(1-p)e^{sb}\right)=(p-1)u+\log\left(p+(1-p)e^u\right).$$ This is a smooth function so that by Taylor's theorem we have that for any $u\in\mathbb{R}$ there exists $\xi=\xi(u)\in\mathbb{R}$ such that $$\psi(u)=\psi(0)+\psi^\prime(0)u+\frac{1}{2}\psi^{\prime\prime}(\xi)u^2.$$ As $$\psi^\prime(u)=(p-1)+1-\frac{p}{p+(1-p)e^u}$$ we have that $\psi(0)=0$ and $\psi^\prime(0)=0$. Furthermore, as $$\psi^{\prime\prime}(u)=\frac{p(1-p)e^u}{(p+(1-p)e^u)^2}$$and$$\psi^{(3)}(u)=\frac{p(1-p)e^u(p+(1-p)e^u)(p-(1-p)e^u)}{(p+(1-p)e^u)^2}$$ we see that $\psi^{\prime\prime}(u)$ has a stationary point at $u^*=\log\left(\frac{p}{p-1}\right)$. For $u$ slightly less than $u^*$ we have $\psi^{(3)}(u)>0$ and for $u$ slightly larger than $u^*$ we have $\psi^{(3)}(u)<0$. Therefore, $u^*$ is a maximum point and so $$\psi^{\prime\prime}(u)\leq\psi^{\prime\prime}(u^*)=\frac{1}{4}.$$ Hence, $\psi(u)\leq\frac{u^2}{8}$ which implies that $$\log\left(\mathbb{E}\left(\exp(sV_i)\right)\right)\leq\frac{u^2}{8}=\frac{s^2(b-a)^2}{8}.$$ Therefore, 
    \begin{align*}
        \mathbb{E}\left(\exp\left(t\sum_{i=1}^n\left(U_i-\mathbb{E}(U_i)\right)\right)\right)&=\prod_{i=1}^n\mathbb{E}\left(\exp\left(t(U_i-\mathbb{E}(U_i))\right)\right)\\&\leq\prod_{i=1}^n\exp\left(\frac{t^2(b-a)^2}{8}\right)\\&\leq\exp\left(\frac{nt^2(b-a)^2}{8}\right).
    \end{align*} 
\end{proof}

\end{document}